\newtheorem{theorem}{Theorem}
\newtheorem{lemma}{Lemma}
\newtheorem{defn}{Definition}
\title{\LARGE \bf
Asymptotic Optimality of Rapidly Exploring Random Tree
}
\author{Titas Bera$^{1}$ and Debasish Ghose$^{2}$ and Suresh Sundaram$^{3}$% <-this % stops a space
\thanks{*This work was supported byt ST-Engineering under project CRP3-P2P}% <-this % stops a space
\thanks{$^{1}$Titas Bera is a research fellow in the School of Electrical and Electronic Engineering,
        Nanyang Technological University, Singapore
        {\tt\small btitas@ntu.edu.sg}}% 
\thanks{$^{2}$Debasish Ghose is with the Faculty in the Department of Aerospace Engineering,
        Indian Institute of Science, India
        {\tt\small dghose@aero.iisc.ernet.in}}%
\thanks{$^{3}$Suresh Sundaram is with the Faculty in the School of Computer Science and Engineering,
        Nanyang Technological University, Singapore
        {\tt\small Ssundaram@ntu.edu.sg}}%        
}
\begin{document}

\maketitle
\thispagestyle{empty}
\pagestyle{empty}

%%%%%%%%%%%%%%%%%%%%%%%%%%%%%%%%%%%%%%%%%%%%%%%%%%%%%%%%%%%%%%%%%%%%%%%%%%%%%%%%
\begin{abstract}
In this paper we investigate the asymptotic optimality property of a randomized sampling based motion planner, namely RRT. We prove that a RRT planner is not an asymptotically optimal motion planner. Our result, while being consistent with similar results which exist in the literature, however, brings out an important characteristics of a RRT planner. We show that
the degree distribution of the tree vertices follows a power law in an asymptotic sense. A simulation result is presented to support the theoretical claim. Based on these results we also try to establish a simple necessary condition for sampling based motion planners to be asymptotically optimal.
\end{abstract}

%%%%%%%%%%%%%%%%%%%%%%%%%%%%%%%%%%%%%%%%%%%%%%%%%%%%%%%%%%%%%%%%%%%%%%%%%%%%%%%%
\section{INTRODUCTION}

Several early studies have shown that the basic problem of robot motion planning is PSPACE-complete \cite{schwartz} and exact deterministic algorithms show exponential complexity with the dimension of the configuration space \cite{reif1}. To avoid such curse of dimensionality, several sampling based probabilistic motion planners have been proposed and widely used for motion planning for robots in high dimensional configuration spaces. Although, such randomized sampling based algorithms show polynomial complexity, it is well known that these algorithms can only guarantee completeness in a probabilistic sense.

Two most extensively used and studied sampling based approaches are Probabilistic Roadmap Method or PRM, \cite{kavraki}, and Rapidly Exploring Random Tree or RRT, \cite{lavalle}. A very general characteristics of both these methods are that these algorithms work in two consecutive phases. 

In the first phase, called a processing phase, a graph/tree structure is created which consist of nodes, representing the free configurations and edges, representing a planning phase which should be executed by a local planner. The main purpose of this local planner is to connect a pair of selected nodes by means of solving a small scale local planning problem. At the end of the processing phase, a network of achievable configurations within the configuration space is created. 

The second phase or a query phase, uses a graph search algorithm to identify a path between any two nodes corresponding to two given user input configuration. Because of various types of sampling scheme and local planners, there exist many variants of these sampling based planners. For example, EST \cite{hsu_1}, Lazy-PRM \cite{bolin}, sampling based roadmap of trees \cite{plaku_1}, and RRT*\cite{karaman}, to name a few. For necessary details about these planners, see \cite{choset} and for a somewhat old but excellent survey see \cite{carpin}.

Since its inception, several attempts have been made to formalize the performance of these algorithms in terms of various parameters such as properties of the free configurations space, type of sampling sequence, etc. The objective was to understand the completeness and optimality of the algorithmic solution to the problem. For example, In \cite{kavaraki_2}, \cite{ladd} using various combinatorial and measure theoretic approach,  failure probability of s-PRM is analyzed. In \cite{hsu} it is shown that the performance of PRM is dependent on the expansiveness property of the free configuration space. It is shown that, with an expansive free space, failure probability to find a path decreases exponentially as number of samples increases. In \cite{sidhartha}, a smoothed analysis of PRM is given which shows how such planners utilizes discrepancies in input specifications. In \cite{karaman}, the asymptotic optimality of many such sampling based algorithms analysed and characterized in terms of algorithmic completeness, asymptotic convergence to optimal solution, and time complexities. This paper also points out the lack of asymptotic optimality of the solution returned and consequently proposes new algorithms which are asymptotically optimal. In \cite{titas} the completeness and optimality properties of various PRM planners are investigated in the light of different random sampling sequence.

In this paper, we present an analysis of the asymptotic optimality property of RRT. Note that, in \cite{karaman}, using a series of complex argument, it is already established that RRT is not an asymptotically optimal planner. In this work we present a much simpler proof of the asymptotic sub-optimality property of RRT. Although our work re-establishes a known result, it also brings out an important property of such planners related to the behaviour of asymptotically optimal sampling based planner, in general. In particular, we focus on the degree distribution of the RRT node vertices and the relationship to this with asymptotic optimality property. We also substantiate the findings with a numerical simulation presented at the end of the paper. Our work is primarily based on the analyse of scale free properties of a random network, more specifically on the power law of the asymptotic degree distribution of the network. For details of this problem and definitions, see \cite{fkp1},\cite{fkp2}.

This paper is organized as follows. In section \ref{Problem_formulation} we define the problem of motion planning and define the notion of asymptotically optimal motion planning. Following in section \ref{rrt}, we present and discuss the RRT planner in detail. Section \ref{asympop} contains the main result regarding the asymptotic optimality of RRT. Section \ref{simulation} contains a simulation results and discussion on the validity of our theoretical claim. Finally, in \ref{conclusion} we conclude and enlists the future work.

\section{Motion Planning Problem Formulation}\label{Problem_formulation}
The configuration of an autonomous agent, with $d$ degrees of freedom, can be represented as a point in a $d$-dimensional space, called the configuration space $\mathbb{C}$, which is locally like a $d-$dimensional Euclidean space $\mathbb{R}^d$. A configuration $q$ in $\mathbb{C}$ is free if the robot placed at $q$ does not collide with the obstacles in $\mathbb{C}$. Let us augment this definition of configuration space with its tangent bundle. A tangent bundle of $\mathbb{C}$ is defined as $T(\mathbb{C}) = \cup_{q \in \mathbb{C}}T_q(\mathbb{C})$, where $T_q(\mathbb{C})$ is the collection of all tangent vectors at $q$. 

The configuration space together with its tangent bundle is called a state space $X$, in which a state $x \in X$ is simply defined as $x = (q, \dot{q})$. Holonomic constraints can be defined as $h_i(q,t) = 0$. Non-holonomic constraints require the use of rate variables and or inequalities, that is $l_i(q,\dot{q},t) = 0$ or $l_i(q,\dot{q},t) < 0$. Differential constraints can be written in Lagrangian dynamics as a set of equations of the form $g_i(q,\dot{q},\ddot{q},t) = 0$, additionally involving acceleration. The use of state space  formulation allows representations of the dynamic constraints as a set of $m$ equations $G_i(x, \dot{x}) = 0$, $m < 2d$, where, $d$ is the dimension of the configuration space. These equations can be rewritten in the form $\dot{x} = f(x,u)$, where $u \in U$, where $U$ is the set of allowable control inputs to the system. The equations thus describe the state transitions resulting from a control input.

To define state space obstacles apart from defining $x \in X_{\text{obs}} \Leftrightarrow q \in C_{\text{obs}}$ for $x = (q, (\dot q))$, we need to define the reachable set from an initial configuration. For the system defined by the expression $\dot{x} = f(x,u)$, a state $x'$ can be obtained by applying a control input $u$ over time $t$ from an initial state of $x_0$. The set of all possible $x'$ is called the reachable state of $x_0$ for a time $t$. For each state $x$, among the set of reachable states, one can define future collision states $X_{\text{fc}}$ and free state space $X_{\text{free}} = X \setminus X_{\text{fc}}$. Note that, $X_{\text{fc}}$ grows as the speed of the system increases and may look rather different from $X_{\text{obs}}$. This makes finding a valid kinodynamic trajectory more difficult. 

The goal of a motion planner is to find a trajectory $x(t) \in X_{\text{free}}$ from an initial state $x_{\text{init}} \in X_{\text{free}}$ to a final state $x_{\text{final}} \in X_{\text{free}}$ and to find a time parametrized function of control input $u(t)$ that results in such a trajectory. For convergence issues, a general goal subset $x_{\text{goal}}$ is assumed, rather than a specific $x_\text{{final}}$. Clearly, $x_{\text{final}}\in{x_{\text{goal}}}$.

\section{RAPIDLY EXPLORING RANDOM TREE}\label{rrt}
Rapidly Exploring Random Tree (RRT) has been shown to be very effective in solving robot motion planning problems in a complex state space with kinodynamic motion constraints. RRT is introduced in \cite{lavalle},\cite{lavalle_2} as an efficient data structure and sampling scheme to quickly search high dimensional spaces that have algebraic constraints (arising from the obstacle) and differential constraints (arising from nonholonomy and system dynamics). The algorithm incrementally builds a tree whose nodes are different states of the robot/vehicle. These nodes are added randomly to the tree until one of the node comes close enough to any of the states in $x_{\text{goal}}$. Next, that goal state is added to the tree and a solution trajectory connecting $x_{\text{final}} \in x_{\text{goal}}$ and $x_{\text{init}}$ can be found by backtracking the nodes. The edges of the tree forms a feasible path or solution trajectory connecting a pair of initial and final states. 

The key idea behind RRT is to bias the tree growth towards unexplored regions of the state space by random sampling and extending tree nodes to those regions. The selection of tree nodes for expansion is heavily dependent on current spatial distribution of tree nodes within the state space. Implicitly, the nodes with larger Voronoi cells are more probable for expansion. This is because the probability that a node is selected for expansion is directly proportional to the volume measure of its Voronoi cell. The tree node extension logic is based on forward simulation of system dynamics upon random control input. In the following, we present the basic RRT algorithm. The RRT algorithm consists of two subroutines, Build-RRT (Algorithm \ref{Build RRT}) and Extend-RRT (Algorithm \ref{Extend RRT}).
\begin{algorithm}
\SetAlgoLined
 %\TitleOfAlgo{Build-RRT($q_{init}$)}
 $T\cdot{init}(x_{init})$~$\rhd$~Initialize tree $T$\;
 \For{i=1,\ldots,K}
 {
  $x_{rand}\leftarrow$ \text{Random Configuration}\;
  \text{Extend} $(T,x_{rand})$
 }
 \Return{$T$}
 \caption{Build-RRT}\label{Build RRT}
\end{algorithm}

\begin{algorithm}
\SetAlgoLined
 %\TitleOfAlgo{Build-RRT($q_{init}$)}
 $x_{near}\leftarrow$\text{Nearest Neighbor} $(x,T)$\;
 $x_{new} \leftarrow$\text{New State}$(x_{near},u)$\;
 \If{$x_{new}$ \text{is Not in Obstacle}}
  {
   $T\cdot{add\_vertex} (x_{new})$\;
   $T\cdot{add\_edge} (x_{near},x_{new},u_{new})$\;
  \If{$x_{new} \in X_{goal}$}
  {
    \Return{Reached}\;    
    \Else
    {
    \Return{Continue}
    }
    }
    }
 \caption{Extend RRT}\label{Extend RRT}
\end{algorithm}

The Build-RRT algorithm initially samples a random state or $x_{rand}$. In the Extend-RRT function, a nearest node $x_{near}$ from the tree to the generated random state $x_{rand}$ is selected for future expansion. The function New State($x_{near},u$), does a forward simulation of system dynamics for $\Delta t$ time period by applying a control input $u \in U$, where $U$ is a finite input set, to the state at $x_{near}$. This input can be chosen at random or can be selected from all possible inputs by choosing one which yields a new state $x_{new}$ which is as close as possible to $x_{rand}$. The selection of input $u$ can also be based on minimization of some performance criterion. Further, $x_{new}$ is checked for collision and system constraint violation. If $x_{new}$ does not collide with any of the state space obstacles and satisfies all the constraints, then $x_{new}$ is added to the tree as a child node of $x_{near}$; otherwise it is discarded. If $x_{new} \in x_{goal}$, the algorithm stops. This way the vertices or nodes of the RRT tree eventually forms a large connected component within $X_{free}$ and can come arbitrarily close to any specified $x_{goal}$. Note that, since the algorithm is only probabilistically complete, the algorithm continues to search for a solution until it finds a close enough node to $x_{goal}$. Heuristic termination condition can be added to stop the algorithm after a certain number of iterations.

Before we move on to the asymptotically optimality analyse of the RRT planner it is worthwhile to look into a few aspects of the algorithm. For example, although RRT vertices can come arbitrarily close to any state in $x_{goal}$, the convergence to the solution trajectory may be slow. One solution may be to sample certain $x_{goal} \in X_{goal}$ repeatedly in random sampling stage so that this introduces a biased sample generation towards $X_{goal}$. In such a way, a quick convergence towards the solution trajectory can be found, although it may lead RRT vertices to fall into a trap formed by certain spatial distribution of state space obstacle.

The choice of step size and $\Delta t$ used in forward simulation of the Extend function is also important. If the obstacles are located far away and system dynamics is considerably simple, then a larger step size can be used. In such cases, instead of attempting to extend $x_{near}$ by incremental step size, one can use the Extend function repeatedly until the extension is no longer possible. That is, a forward simulation produces states that violate system constraints or collide with the obstacle. Obviously, the success of this heuristics method is heavily dependent on the chosen problem. As reported in \cite{linderman}, this modification works best for holonomic planning.

RRT can be used as a single directional or bidirectional planner. A single directional RRT planner builds a tree starting from $x_{init}$ towards $X_{goal}$. A bidirectional planner however builds two separate trees. One from $x_{init}$ to $X_{goal}$, and the other is from any state $x \in X_{goal}$ to $x_{init}$. Once the distance between a pair of vertices from either tree is within a certain tolerance bound, the two trees gets connected and forms a large single connected component.  For holonomic planning, a bidirectional planner is faster than the single directional planner. This heuristics, however, does not work for non-holonomic or for differential systems, because of reachability problem.

Finally, an important subroutine in the RRT algorithm is the nearest neighbour search. Currently, the basic RRT's nearest neighbour search algorithm works based on the principle of exhaustive search. For a fixed dimension $d$, if at any instant the RRT tree contains $n$ vertices, the task of searching for a true nearest neighbour node from a given query point has $O(n)$ time complexity. This approach, although linear with the number of vertices, eventually slows down (in the sense of computational time) the growth of RRT as the number of vertices increases. It should be noted that any approximate nearest neighbour searching degrades the RRT's exploration capability. For example, suppose at every iteration, a random vertex is selected as the nearest neighbour and RRT is expanded from this vertex. The probability of selecting any node from the $n$ number of nodes is $1/n$. While for a basic RRT, probability of selecting any node $q$ as nearest neighbour for some query point, is $O(\text{Vor}(q))$, where $\text{Vor}(q)$ is the associated Voronoi cell of the node $q$. Since the node with largest Voronoi area has a higher probability of selection, the RRT vertices tend to grow to the region of largest Voronoi regions or, in other words, to the unexplored regions of state space. Clearly, selecting a random vertex as nearest neighbour is a compromise with the search of unexplored areas. However, after a large number of iterations if $O(\text{Vor}(q))\approx 1/n$ then selection of any vertex, as nearest neighbour, may be effective, specially for holonomic cases.

\section{Asymptotic Optimality of RRT}\label{asympop}
%In this section, the optimality property of the RRT algorithm is discussed. First, we show an existing result due to Karaman and Frazolli \cite{karaman}, followed by a new asymptotic optimality analysis of RRT algorithm. The new analysis also shows a necessary condition for optimality in RRT like planners.
Sometimes it is necessary to find an optimal solution for certain motion planning problems. For example, a motion planning task may contain additional objectives such as minimization of control effort or determination of a shortest path to minimize fuel cost. Except for a very simplistic situation, in general, the search for an optimal solution is difficult. For example, consider the general problem of shortest path calculation in $3$-dimension among polyhedral obstacles. There are a number of solutions that exist only for a restricted class of problems, as can be found in \cite{mount}. These solutions are algorithms with polynomial time complexity, applicable in an environment which consists of a few convex polyhedra. In \cite{papa} a polynomial time approximate algorithm is given which finds a sub-optimal path in $3$-dimension. In \cite{sharir_2} a $2^{2^{O(n)}}$ complexity algorithm is proposed to construct the shortest path by reducing the problem to an algebraic decision problem. The best bound is obtained in \cite{reif}, which is $2^{n^{O(1)}}$. 
%Canny \cite{canny1988complexity} shows that finding the shortest path is NP-hard and that an environment with polyhedral obstacles may contain exponentially many shortest path classes.

These results indicate that finding an optimal path in a very general condition is computationally expensive. Since sampling based algorithms are used to avoid computational complexity issues that are present in obstacle avoidance and motion planning problems, it is natural to investigate the quality of the path returned by a sampling based motion planner. In this context, only \cite{karaman} discusses and show the asymptotic optimality properties of various sampling based motion planning algorithms including RRT. 

For sampling based motion planning algorithms, optimality of the solution can only be analysed in an asymptotic sense. Naturally, a definition of the asymptotic optimality of the solution is required. We outline the definition according to \cite{karaman}.
\begin{defn} An algorithm is asymptotically optimal if, for any path planning problem and cost function $c$ that admits a robustly optimal solution with finite cost $c^*$, $P(\{\limsup\limits_{n\rightarrow \infty}Y_n = c^*\})=1$, where $Y_n$ is the algorithmic solution after $n$ iterations.
\end{defn}

In the argument presented in \cite{karaman}, first it is defined that the set of vertices that contains the $k^{\text{th}}$ child of the root along with all its descendants in the tree is called the $k^{\text{th}}$ branch of the tree.  Then, it is shown that a necessary condition for the asymptotic optimality of RRT is that infinitely many branches of the tree contain vertices outside a small ball centred at the initial condition. This is captured in the following lemma.
\begin{lemma}
Let $0<R<\inf_{y \in \mathbb{X}_{goal}}||y-x_{init}||$. The event $\{\lim_{N\rightarrow \infty}Y_n = c^*\}$ occurs only if the $k^{\text{th}}$ branch of the RRT has vertices outside the $R$-ball centred at $x_{\text{init}}$, for infinitely many $k$.
\end{lemma}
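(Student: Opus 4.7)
My plan is to prove the contrapositive: if only finitely many branches, say $B_1,\ldots,B_K$, of the RRT ever accumulate a vertex outside the $R$-ball centred at $x_{\text{init}}$, then $Y_n$ cannot converge to $c^*$ almost surely, contradicting the event in the lemma.

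The first step is purely geometric. Because $R < \inf_{y \in \mathbb{X}_{goal}}\|y - x_{\text{init}}\|$, every tree-path from the root $x_{\text{init}}$ to any vertex in $\mathbb{X}_{goal}$ must leave the $R$-ball along some edge, and hence must be contained in one of the finitely many branches $B_1,\ldots,B_K$. In particular, the cost $Y_n$ of the best tree-path to the goal after $n$ iterations satisfies
\[
Y_n \;\geq\; \min_{1\leq i\leq K} c_i,
\]
where $c_i$ denotes the infimum, over all iterations, of the cost of a tree-path to $\mathbb{X}_{goal}$ that lies entirely within $B_i$.

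The next step is to deduce that $\min_i c_i > c^*$ almost surely. I would exploit the robust optimality of $c^*$ from Definition 1: an optimal trajectory $\sigma^*$ crosses $\partial B(x_{\text{init}},R)$ at some first point $p^*$, and by the continuity of the cost with respect to path perturbations implicit in robust optimality, any candidate trajectory whose first crossing of $\partial B(x_{\text{init}},R)$ lies more than $\epsilon$ away from $p^*$ must incur cost at least $c^* + \eta(\epsilon)$ with $\eta(\epsilon) > 0$. Each branch $B_i$ is seeded by a continuous random sample followed by the deterministic extend step, so the collection of $R$-sphere crossings realized within $B_1\cup\cdots\cup B_K$ is almost surely a set of positive distance from $p^*$. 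Taking the minimum over the finitely many branches then yields a uniform $\delta > 0$ with $\min_i c_i \geq c^* + \delta$ almost surely, contradicting $Y_n \to c^*$.

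The hard part is the final argument in the second step: one must rule out that a single branch, via the random addition of descendants that remain in the same $B_i$, accumulates an infinite family of $R$-sphere crossings whose locations approach $p^*$ arbitrarily closely. The key observation will be the Voronoi bias of the nearest-neighbour extend rule. As the interior of the $R$-ball becomes densely populated, the Voronoi cells of inside-ball vertices shrink much faster than those of the few outside-ball vertices in $B_i$, so newly generated samples land overwhelmingly inside the ball and direct the growth inward. A Borel--Cantelli argument on the event that a subsequent exit crossing from $B_i$ lands within $\epsilon$ of $p^*$ should then show that, without the creation of fresh outside-ball branches, such a refinement occurs only finitely often almost surely. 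Combined with the previous estimates, this closes the proof.
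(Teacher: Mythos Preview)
The paper does not actually prove this lemma: it is quoted verbatim from \cite{karaman} as background for the authors' own Theorem~1, so there is no in-paper proof to compare against. What follows therefore evaluates your proposal on its own merits and against the original argument of Karaman--Frazzoli.

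Your contrapositive set-up and the first geometric observation are correct: since $R<\inf_{y\in\mathbb{X}_{\text{goal}}}\|y-x_{\text{init}}\|$, every tree path reaching the goal must lie in one of the finitely many branches $B_1,\dots,B_K$. The difficulty is your second step, and specifically the ``hard part'' you single out. You try to control the \emph{locations at which each branch crosses the $R$-sphere}, and then invoke a Voronoi-bias/Borel--Cantelli argument to rule out that a single branch produces an infinite family of sphere crossings accumulating at the optimal crossing point $p^*$. This argument is not made precise, and as stated it is not correct: nothing in the RRT dynamics prevents descendants of $c_i$ that sit just inside the sphere from being repeatedly extended outward at locations approaching $p^*$, regardless of how dense the ball interior becomes. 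The Voronoi shrinkage you appeal to bounds how often \emph{inside} vertices are selected, but outward extensions are driven by samples landing in the Voronoi cells of \emph{near-boundary} vertices, whose cells need not vanish in the way you suggest.

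The argument in \cite{karaman} avoids this entirely by looking not at sphere crossings but at the \emph{first edge} of each branch. Because RRT never rewires, every path through $B_i$ begins with the fixed segment $(x_{\text{init}},c_i)$, where $c_i$ is the $i$th child of the root. Hence the cost of any such path is bounded below by $\|x_{\text{init}}-c_i\|+c^*(c_i)$, with $c^*(c_i)$ the optimal cost-to-go from $c_i$. Under the standing robust-optimality and uniqueness assumptions this quantity exceeds $c^*$ whenever $c_i\notin\sigma^*$; since each $c_i$ has an absolutely continuous distribution and there are only finitely many of them, $\min_i\big(\|x_{\text{init}}-c_i\|+c^*(c_i)\big)>c^*$ almost surely. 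This one-line lower bound replaces your entire sphere-crossing analysis, so the Voronoi/Borel--Cantelli machinery is unnecessary and the gap in your proposal disappears once you shift attention from crossing points to the immutable first edge.
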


We observe that a necessary condition for optimality should be applicable to all the nodes in the tree and not just only $x_{\text{init}}$. Consequently, we hypothesize that for a tree like planner to return an asymptotically optimal solution it is a necessary condition to have a mechanism which creates infinite degree of the tree vertices.

Next, it is shown that the RRT algorithm does not satisfy the condition for asymptotic optimality. In the detailed analysis, as can be found in \cite{karaman}, the authors note that if $U = \{X_1,X_2,\ldots, X_n\}$ be a set of independently sampled and uniformly distributed points in the $d$-dimensional unit cube $[0,1]^d$, and if $X_{n+1}$ is a newly sampled i.i.d. point, then the probability, that among all points in $U$ the point in $X_i$ is the one that is closest to $X_{n+1}$, is $1/n$. An immediate consequence of this result is that each vertex of the RRT has unbounded degree almost surely as the number of samples approaches infinity. 

We note that, although in the sample configuration phase of RRT algorithm, a new sample is chosen according to a uniform distribution, the selection of a node as nearest neighbour to this sample also depends on the existing geometric Voronoi partition of all nodes. The node with higher Voronoi volume will have a higher chance of being selected as the nearest neighbour. Only when the Voronoi partition is approximately equal for every node, the selection of the nearest neighbour will be independent of spatial distribution of nodes. Secondly, even if the nearest neighbour selection is according to a uniform distribution, this does not guarantee that each vertex will have an unbounded degree. We present a theoretical argument as well as a simulation to support our claim. Additionally, an extension of this observation leads to an alternate proof of non-optimality of RRT algorithm in an asymptotic sense.

\begin{theorem}\label{theorem_1}
The solution trajectory returned by the RRT algorithm is not asymptotically optimal.
\end{theorem}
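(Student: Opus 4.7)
The plan is to contradict the necessary condition stated in Lemma~1 by showing that the root $x_{\text{init}}$ almost surely acquires only finitely many children. Since every branch of the RRT emanates from a distinct child of the root, a finite root degree immediately bounds the total number of branches and rules out the ``infinitely many branches outside the $R$-ball'' condition required for asymptotic optimality. The theorem then follows by direct contrapositive.

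The first step is to recast the nearest-neighbour selection in Voronoi terms. Because $x_{\text{rand}}$ is drawn uniformly from $X_{\text{free}}$, the conditional probability that $x_{\text{init}}$ is chosen as $x_{\text{near}}$ at iteration $n$ equals $p_n := \text{Vol}(\text{Vor}_n(x_{\text{init}}))/\text{Vol}(X_{\text{free}})$, where $\text{Vor}_n$ denotes the Voronoi cell of the node with respect to the tree at step $n$. Letting $A_n$ be the event that the root is selected at step $n$, the degree of $x_{\text{init}}$ after $N$ iterations is upper bounded by $\sum_{n=1}^{N}\mathbf{1}_{A_n}$, with $P(A_n\mid \mathcal{F}_{n-1})=p_n$ under the natural filtration $\mathcal{F}_n$ generated by the first $n$ samples.

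The second step is to establish a summable bound on $p_n$. Every new descendant placed near $x_{\text{init}}$ slices the root's Voronoi cell, and because the Extend step produces $x_{\text{new}}$ within a fixed displacement $\Delta t$ of $x_{\text{near}}$, the early iterations of the process are geometrically compelled to land close to $x_{\text{init}}$, rapidly densifying its neighbourhood. Invoking the FKP analysis of scale-free geometric networks \cite{fkp1, fkp2} and the self-reinforcing fact that children of $x_{\text{init}}$ shrink its own cell, I would argue that $\text{Vol}(\text{Vor}_n(x_{\text{init}}))$ decays faster than $1/n$ quickly enough to make $\sum_n p_n<\infty$ almost surely. The conditional Borel--Cantelli lemma then yields $\sum_n \mathbf{1}_{A_n}<\infty$ a.s., so the root has finite degree with probability one, contradicting Lemma~1 and proving the theorem. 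The same argument, applied to any interior vertex once its Voronoi cell has shrunk sufficiently, is consistent with and in fact explains the power-law degree distribution the paper announces.

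\textbf{Principal obstacle.} The decisive step is the summable-decay estimate on $p_n$. RRT does not scatter vertices uniformly in $X_{\text{free}}$, so standard Poisson--Voronoi or binomial-Voronoi tail bounds do not apply directly, and the root's cell is coupled nonlocally to the entire history of extensions through the global geometry of the Voronoi diagram. Quantifying how quickly descendants accumulate near $x_{\text{init}}$ --- the very densification phenomenon that drives the power-law degree distribution announced in the introduction --- and converting that densification into a summable upper bound on $p_n$ is where I expect the bulk of the technical effort to reside.
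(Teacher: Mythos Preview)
Your strategy---bound the root degree via conditional Borel--Cantelli on the selection events $A_n$---is cleaner to state than what the paper does, but the summability step you flag as the ``principal obstacle'' is more than a technical hurdle: it is very likely false as stated. Once the RRT has filled $X_{\text{free}}$ to any fixed resolution (which it does, by probabilistic completeness and the Voronoi bias), the Voronoi cell of any fixed vertex among $n$ roughly equidistributed nodes has volume of order $1/n$; the paper itself adopts $\mu(V_{x_i,n})^{0}=\mathcal{O}(1/n)$ as the baseline, citing \cite{janson}. Hence $p_n$ is of order $1/n$ and $\sum_n p_n$ diverges. Your early-densification heuristic only depresses the first few terms; it cannot alter the tail. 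This is precisely the regime Karaman--Frazzoli invoke (and the paper quotes just before the theorem) to argue that every vertex has \emph{unbounded} degree almost surely---the opposite of what you are trying to establish.

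The paper escapes this trap by changing the independent variable from iteration count $n$ to hit count $k$. Each time a sample lands in $V_{x_i}$, a cone-decomposition argument from \cite{fkp2,yukich} shows the cell loses a fixed fraction of its volume in expectation, yielding $E[\mu(V_{x_i}^{t_0+k})]\le\gamma^{k}\mu(V_{x_i}^{0})$ with $\gamma=(2^d f(d)-1)/(2^d f(d))<1$. Markov's inequality plus a union bound then turn this into a tail estimate $P(\deg(x_i)\ge k+1)\lesssim\gamma^{k/2}$, and summing over the $n$ nodes gives that the expected number of vertices of degree at least $k+1$ is $\mathcal{O}(\gamma^{k/2})$ uniformly in $n$---the power-law (indeed exponential) degree tail the paper advertises. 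The non-optimality conclusion is then drawn from this global degree scarcity rather than from the root alone. The FKP machinery you cite is exactly the right tool, but it delivers a geometric shrink \emph{per hit}, which feeds a degree-tail bound via Markov, not a summable-in-$n$ bound on $p_n$; if you try to push your route through, you will end up reindexing by $k$ and arrive at the paper's argument specialised to a single vertex.
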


\begin{proof}
Before giving the proof, in the following, some definitions are presented which was used in the deduction.

\begin{defn}
Let the state space $X$ be a metric space. We assume $X$ is path connected. That is, any pair of states $A,B \in X$ can be connected by a continuous path $\zeta \in X$ which is appropriately parametrized. Denote $d(x,y)$ as a distance metric in $\mathbb{X}$. For simplicity, we consider a holonomic planning problem.
\end{defn}
\begin{defn}
Let the tree nodes, defined as $\{x_0,x_1,\ldots,x_i,\ldots\}$, form an incremental construction of the RRT tree where indexes are identified with time instances.
\end{defn}
\begin{defn}
Given a region $\mathcal{R}$ and a set of points in $P_X \in \mathcal{R}$, there exists a unique arrangement of partition of the region, denoted as Voronoi partitions $V_{P_X}$ for each $x \in P_X$ as the set of points closer to $x$ than any other point $y \in P_X$.
\end{defn}

Note that, for an $n^{\text{th}}$ incremental construction, the obtained RRT vertices are defined as $\{x_0,x_1,\ldots,x_n\}$. The Voronoi volume of $x_i$, $V_{x_i,n}$ is a non increasing function in $n$. That is, $V_{x_i,n} \geq V_{x_i,n+1}$. We emphasize this fact continuing in the similar way as in \cite{fkp2}.

Let $x_n$ be a point, uniformly at random, in a convex region $R$. Let $x_c$ be a point in $R$ considered as the origin. Now, we can divide $R$ into $f(d)$ number of disjoint cones, defined as $c_1,\ldots,c_{f(d)}$ \cite{yukich}.
Suppose $x_n$ falls into some such cone $c_i$. If $y$ is any other point in $c_i$ then let $R'$ be the set of points in $R$ which is closer to $x_n$ than $x_c$.

Now we can write that $R\setminus R'$ contains a segment of $c_i$ which is shrunk by a factor of two in every dimension. So, in this case,
\begin{equation}
\mu(R\setminus R') \geq \mu(c_i)/2^d
\end{equation}
where, $d$ is the dimension and $\mu$ is the $d$ dimensional Lebesgue measure. Note that, the probability that $x_n$ will be within any of such $c_i$s is equal to the volume ratio of $c_i$ to $R$. Taking the expectation over all $c_i$,
\begin{equation}
E(\mu(R\setminus R') = \sum_{i= 1}^{f(d)}p_i \mu(c_i)/2^d
\end{equation}
where, $p_i$ is the probability that $x_n$ will fall within of the $c_i$. Since, $p_i = \mu(c_i)/\mu(R)$, therefore,
\begin{eqnarray}
E(\mu(R\setminus R')) &\geq & \sum_{i= 1}^{f(d)}{\mu(c_i)}^2/(2^d \mu(R)) \\
& = & \mu(R)/(2^d f(d))
\end{eqnarray}
Naturally, this implies $E(\mu(R')) \leq (2^d f(d) - 1)(\mu(R))/(2^d f(d))$. 

Let, at any instant $t_0$, the Voronoi volume of the $i^{\text{th}}$ node is $\mu(V_{x_i,n})^0$ and $\gamma = (2^d f(d) - 1)/(2^d f(d))$. From recursion, 
\begin{equation}\label{expt1}
E(\mu(V_{x_i,n}^{t_0+k})) \leq \gamma^k \mu(V_{x_i,n}^0)
\end{equation}

Now, we try to show an upper bound on the probability of the event that a RRT vertex increases its degree from $k$ to $k+1$ at some instant.

Let the degree of $x_i$ be $k$. Then, in order to have a degree of $k+1$ in the next instant of the tree generation, either of the following event should happen. For any volume $v_x$, either the Voronoi volume of $V_{x_i,n}$ is more than $v_x$, or the dispersions created by the other vertices of the tree should have a volume less than $v_x$. Taking an union bound,
\begin{equation}
p(\text{deg}(x_i) \geq k+1) \leq p({\mu(V_{x_i,n})}^k \geq v_x) + p(\mu(V_{x_j,n}) \leq v_x) 
\end{equation}
$\forall j \in \{x_0,x_1,\ldots,x_n\}, j\neq i$
Now, using Markov's inequality, we have $ p({\mu(V_{x_i,n})}^k \geq v_x) \leq E(\mu(V_{x_i,n})^k)/v_x $. 

For the other probability we can use again the union bound over all $x_0,\ldots,x_n$ and, $p(\mu(V_{x_j,n}) \leq v_x) \leq \alpha n v_x$, for some constant $\alpha$.
Therefore,
\begin{equation}
p(\text{deg}(x_i) \geq k+1) \leq E(\mu(V_{x_i,n})^k)/v_x +  \alpha n v_x
\end{equation}
If, $\mu(V_{x_i,n})^0 = \mathcal{O}(1/n)$, using (\ref{expt1}), $v_x$ can be chosen optimally as $v_x = \gamma^{k/2}\mathcal{O}(1/ n)$. The choice of $\mu(V_{x_i,n})^0 = \mathcal{O}(1/n)$ is justified as it is shown in \cite{janson}.
Then summing over all the $n$ nodes, the expected number of nodes with degree at least $k+1$ is at most $\alpha \gamma^{k/2}\mathcal{O}(1)$.
%\begin{equation}
%p(\text{deg}(x_i) \geq k+1) \leq \alpha \gamma^{k/2}\sqrt{c_2}
%\end{equation}
Hence, it is evident that the degree distribution of the vertices of the RRT tree follows a power law. This implies that the RRT tree will contain fewer and fewer vertices with higher and higher degree. In other words, the chance that nodes will create an infinite number of branches becomes lesser and lesser as the number of iterations increases.

However, since for a sampling based planner, the measure of the event that the planner finds the optimal path within any finite iteration is zero; hence, there must be a sequence of paths which converges to the optimum path. For existence of such a sequence, it is necessary that the vertex degrees should be infinite, which is in opposition to the present findings. Hence, an RRT planner with such node degree distribution cannot be an asymptotically optimal planner. 
\end{proof}

As a corollary to the above theorem, it can be said that a sampling based planner for which its degree distribution follows a power law cannot be a candidate for optimal motion planning.

In the following section, to further establish Theorem \ref{theorem_1}, a numerical simulation is provided which clearly shows that the cumulative degree distribution follows a power law, if not exponential.

\section{Results and Discussion}\label{simulation}

To validate the theoretical claim the following simulation of an RRT is performed. The system is a simple non-holonomic car, shown in Figure \ref{vehicle}, and defined by the following equations:
\begin{eqnarray}\label{Chapter4_Car_Equaitons}
\dot x &=& v~\cos \theta \\
\dot y &=& v~\sin \theta\\
\dot {\theta} &=& (v/L)~\tan\phi
\end{eqnarray}

\begin{figure}
 \centering
 \def\svgwidth{200pt}
 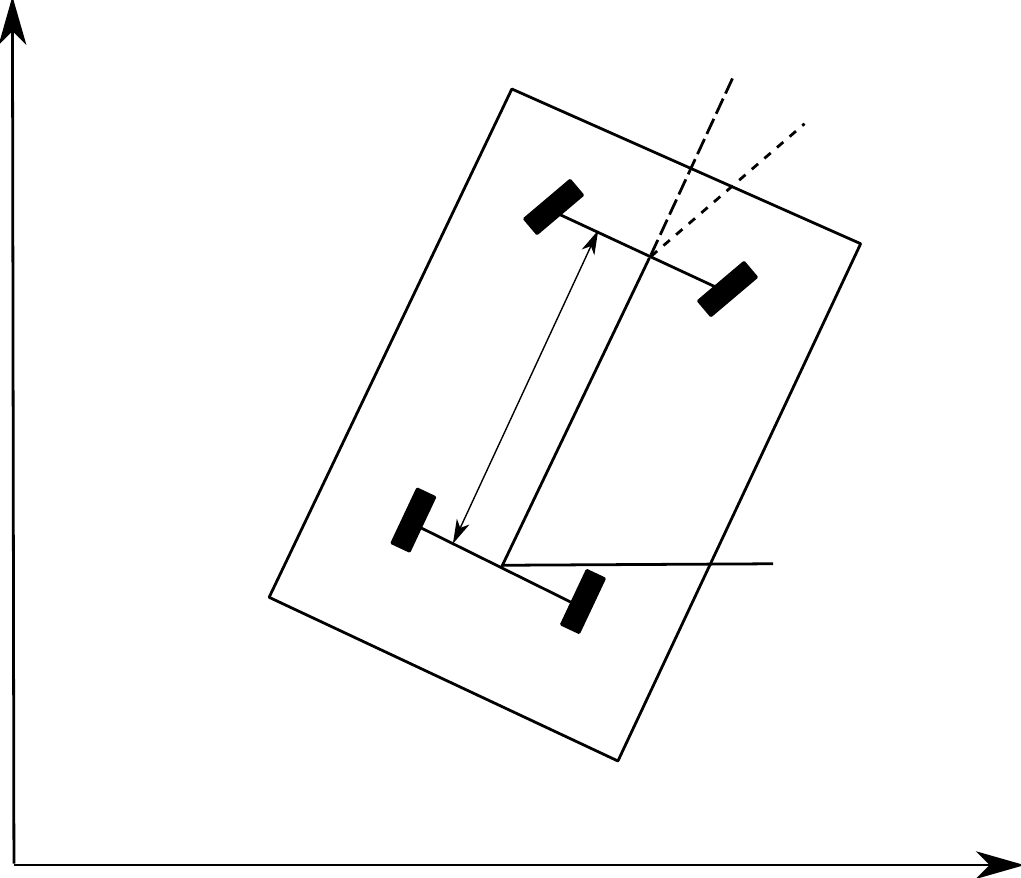
 \caption{A Non-holonomic car}
 \label{vehicle}
\end{figure}
where, $L$ is the length between the wheels, $v$ is the velocity and $\phi$ is the steering angle. The position of the car and its orientation is denoted by a $3$-tuple $(x,y,\theta)$. The control variables are velocity $v$ and steering angle. The configuration space is $\mathbb{R}^2\times S^1$. Figure \ref{Chapter3_Pic_2} shows the histogram of vertex out degree of the RRT planner after $5000$, $10000$, $15000$, and $20000$ iterations. The vertex out degree is an indication of how many branches are emitted from a single vertex. Clearly, the histogram shows that a vertex with very high value of out degree is less frequent compared to the vertices with lesser out degree. 

\begin{figure}
  \centering
  \subfigure[]{\includegraphics[height=4cm,width=4cm]{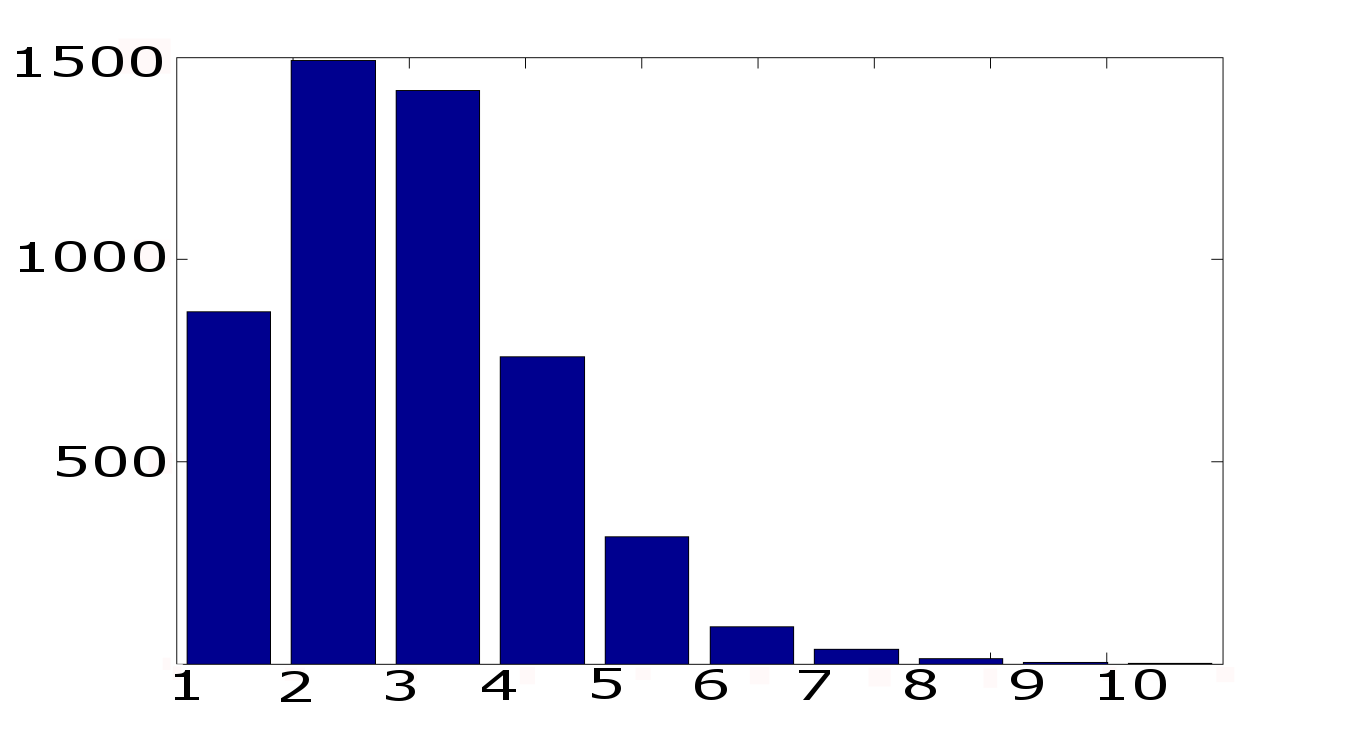}}\hspace*{0.1cm}
  \subfigure[]{\includegraphics[height=4cm,width=4cm]{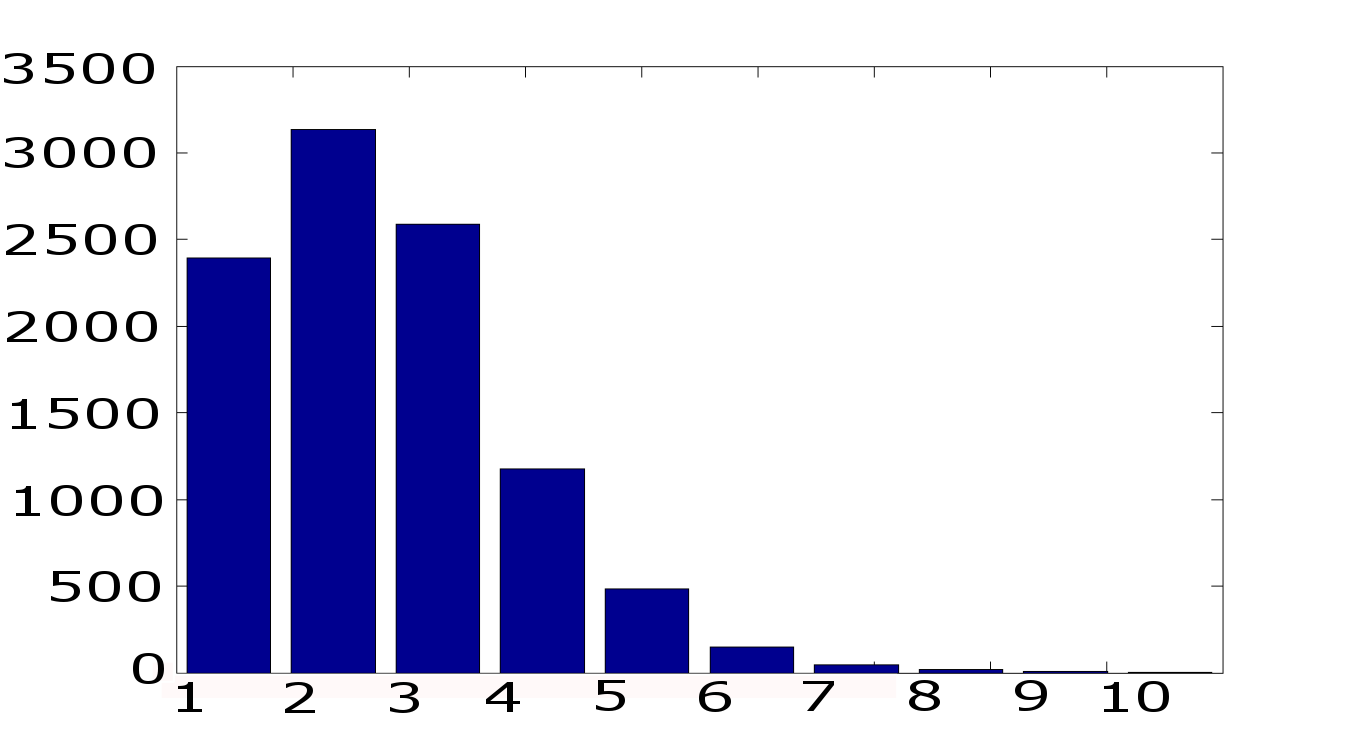}}\\%\hspace*{1.5cm}\\  
  \subfigure[]{\includegraphics[height=4cm,width=4cm]{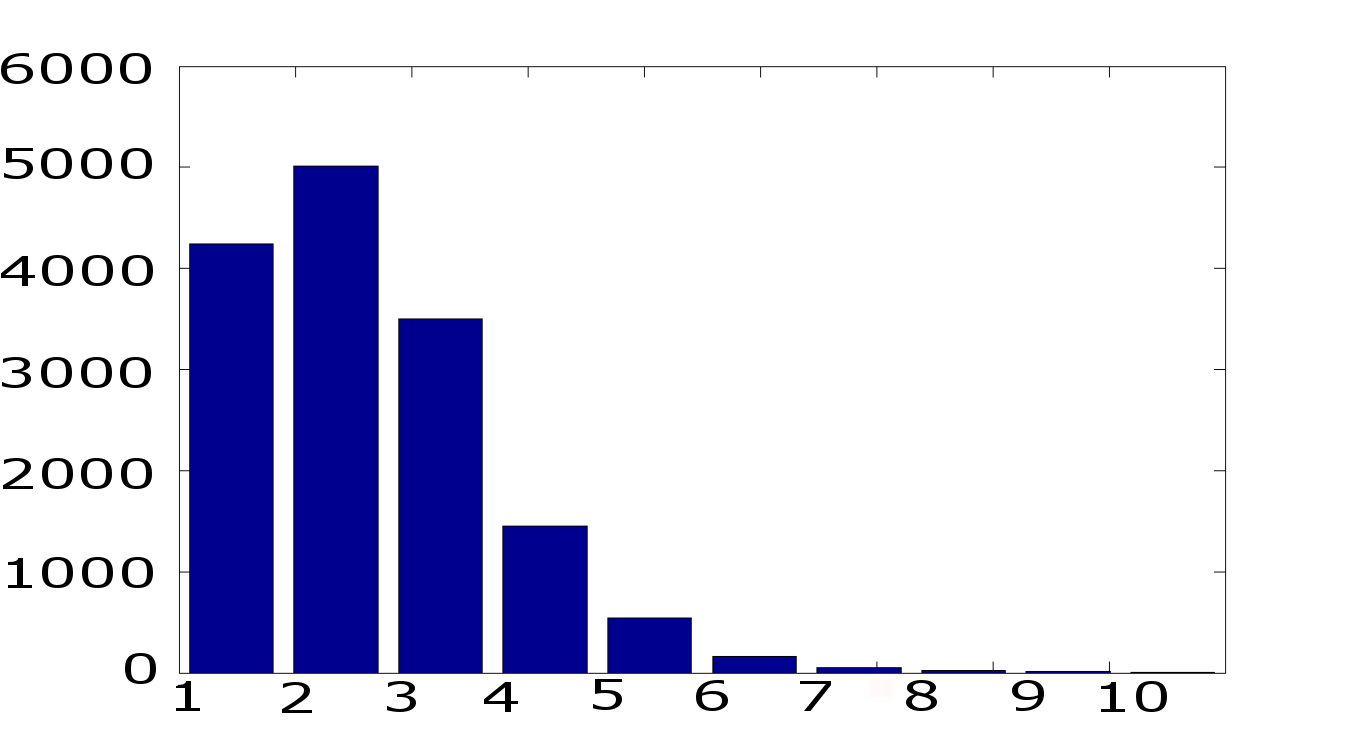}}\hspace*{0.1cm}
  \subfigure[]{\includegraphics[height=4cm,width=4cm]{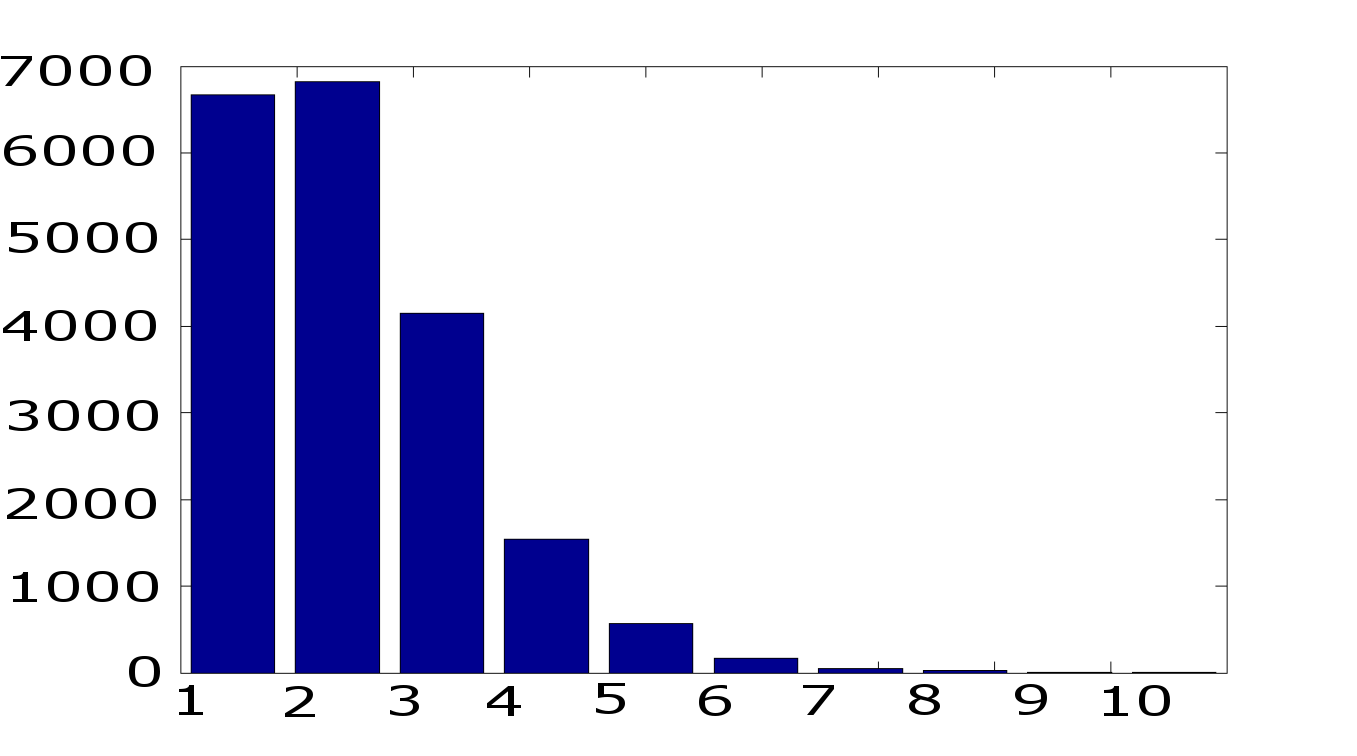}}\\%\hspace*{1.5cm}\\ 
  \caption{(a) Histogram of RRT vertices out degree after $5000$ iterations (b) after 10000 iterations (c) after 150000 iterations (d) after 20000 iterations. The abscissa represents node out degree and the ordinate represents the number of RRT nodes having the same out degree.}
  \label{Chapter3_Pic_2}
\end{figure}
The simulation strongly supports the theoretical claim. Note that this characteristics of RRT vertices are actually independent of system dynamics. This is because the additional node generation is only dependent on random samples and nearest neighbour subroutines.
\section{Conclusion}\label{conclusion}
In this paper we investigated the asymptotic optimality of an existing randomized sampling based algorithms, namely RRT. We showed that the degree distribution in RRT planner follows a power law. This also implies that the RRT planner is not an asymptotically optimal planner which is consistent with existing results. We also present a simulation which strongly supports the theoretical claim. It would be interesting to analyse other sampling based algorithms in the perspective of asymptotic degree distribution.

\end{document}